\newcommand{\norm}[1]{\left\lVert#1\right\rVert}
\newtheorem{theorem}{Theorem}[section]
\newtheorem{corollary}{Corollary}[theorem]
\icmltitlerunning{LSH Partition Function Estimate}
\begin{document} 

\twocolumn[
\icmltitle{A New Unbiased and Efficient Class of LSH-Based Samplers and Estimators for Partition Function Computation in Log-Linear Models}

\begin{icmlauthorlist}
\icmlauthor{Ryan Spring}{rice}
\icmlauthor{Anshumali Shrivastava}{rice}
\end{icmlauthorlist}

\icmlaffiliation{rice}{Department of Computer Science, Rice University at Houston, TX, USA}

\icmlcorrespondingauthor{Anshumali Shrivastava}{anshumali@rice.edu}

\icmlkeywords{Partition Function, Randomized Hashing, Importance Sampling, Language Model, Deep Learning}

\vskip 0.3in
]


\printAffiliationsAndNotice{}  

\begin{abstract} 
Log-linear models are arguably the most successful class of graphical models for large-scale applications because of their simplicity and tractability. Learning and inference with these models require calculating the partition function, which is a major bottleneck and intractable for large state spaces. Importance Sampling (IS) and MCMC-based approaches are lucrative. However, the condition of having a "good" proposal distribution is often not satisfied in practice. 

In this paper, we add a new dimension to efficient estimation via sampling. We propose a new sampling scheme and an unbiased estimator that estimates the partition function accurately in sub-linear time. Our samples are generated in near-constant time using locality sensitive hashing (LSH), and so are correlated and unnormalized. We demonstrate the effectiveness of our proposed approach by comparing the accuracy and speed of estimating the partition function against other state-of-the-art estimation techniques including IS and the efficient variant of Gumbel-Max sampling. With our efficient sampling scheme, we accurately train real-world language models using only 1-2\% of computations.
\end{abstract} 

\section{Introduction}
\label{sec:introduction}
Probabilistic graphical models are some of the most flexible modeling frameworks in machine learning, physics, and statistics. A common and convenient way of modeling probabilities is only to model the proportionality function. Such a specification is sufficient because proportionality can be uniquely converted into actual probability value by dividing them by {\em normalization constant}. The normalization constant is more popularly known as the {\em partition function}.

Log-linear models~\cite{koller2009probabilistic, lauritzen1996graphical} are arguably the most successful class of graphical models for large-scale applications. These models include multinomial logistic (Softmax) regression and conditional random fields. Even the famous skip-gram models \cite{goodman2001bit, mikolov2013distributed} are examples of log-linear models. The definition of a log-linear model states that the logarithm of the model is a linear combination of a set of features $x \in \mathcal{R}^D$. Assume there is a set of states $Y$. Each state $y \in Y$ is represented with a weight vector $\theta$. A frequent task is estimating the probability of a state $y \in Y$. The probability distribution for a log-linear model is $$ P(y|x,\theta) = \frac{e^{\theta_y \cdot x}}{Z_\theta} $$ where $\theta_y$ is the  weight vector, $x$ is the (current context) feature vector, and $Z_\theta$ is the partition function. The partition function is the normalization constant the ensures that $P(y|x, \theta)$ is a valid probability distribution. $$Z_\theta = \sum_{y \in Y} e^{\theta_y \cdot x}$$ Computing the partition function requires summing over all of the states $Y$. In practice, it is an expensive, intractable operation when the size of the state space is enormous.

The value of partition function is required during training and inference.  Assume there is a training set containing $N$ labeled examples $[(x_1, y_1), \dots, (x_N, y_N)]$. The model is trained by minimizing the negative log-likelihood using stochastic gradient descent (SGD).
$$L(\theta) = -\frac{1}{N} \sum_{x \in X} \theta \cdot x + \log(Z_\theta)$$
$$\nabla L(\theta) = -\frac{1}{N} \sum_{x \in X} 1[y_i = k] - P(y_i = k | x_i; \theta)$$
Here, computing $P(y_i = k | x_i; \theta)$ requires the value of partition function $Z_\theta$.  This process is near-infeasible when the size of $|Y|$ is huge. It is common to have scenarios in NLP~\cite{chelba2013one} and vision~\cite{deng2009imagenet} with the size of the state space running  into millions.

Due to the popularity of log-linear models, reducing the associated computational challenges has been one of the well-studied and emerging topics in large scale machine learning literature.  For convenience, we classify existing line of work concerning efficient log-linear models into three broad categories: 1) Classical Sampling or Monte Carlo Based, 2) Estimation via Gumbel-Max Trick~\cite{mussmann2016learning}, and 3) Heuristic-Based.

{\bf 1. Classical Sampling or Monte Carlo:} Since the partition function is a summation, it can be very well approximated in a provably unbiased fashion using Monte Carlo or Importance sampling (IS). IS and its variants, such as annealed importance sampling (AIS) \cite{neal2001annealed}, are probably the most widely used Monte Carlo methods for estimating the partition function in general graphical model. IS works by drawing samples $y$ from a tractable proposal (or reference) distribution $y \sim g(y)$, and estimates the target partition function $Z_\theta$ by averaging the importance weights $f(y)/g(y)$ across the samples [see Section~\ref{sec:IS}], where $f(y) = e^{\theta_y \cdot x}$ is the unnormalized target density.

It is widely known the IS estimate often has very high variance, if the choice of proposal distribution is very different from the target, especially when they are peaked differently. In fact, there is no known effective class of proposal distribution in literature for log-linear models. This line of work is considered a dead end because sampling from a good proposal is almost as hard as sampling from the target. Our solution changes this belief and shows a provable, efficient proposal distribution (unnormalized) and a corresponding unbiased estimator for partition function whose computational complexity is amortized sub-linear time.

{\bf 2. Gumbel-Max Trick:} Previous work \cite{gumbel1954statistical} has shown an elegant connection between the partition function of log-linear models and the maximum of a sequence of numbers perturbed by Gumbel distribution (see Section~\ref{sec:Gumbel}). The bottom line is the value of the log partition function is estimated by finding the maximum value of the state space $Y$ perturbed by Gumbel noise. This observation does not directly lead to any computational gains in partition function estimation because computing the maximum still requires enumerating over the entire state space. Very recently, \cite{mussmann2016learning} showed that computing the same maximum can be reformulated as a maximum inner product search (MIPS) problem, which can be approximately solved efficiently using recent algorithmic advances~\cite{shrivastava2014asymmetric, shrivastava2014improved,shrivastava2015asymmetric}. The overall method requires a single costly pre-processing phase. The initial cost of the pre-processing phase is amortized over several fast inference queries. Since the cost of approximate MIPS is much smaller, estimating the partition function is more efficient than the brute force Gumbel-Max Trick.

Unfortunately, as we show in this paper (Section~\ref{sec:Gumbel_Flaws}), even small perturbations in the identity of the maximum leads to significant performance deviation and poor accuracy. An important thing to note is that since this method needs a MIPS (approximate nearest-neighbor) query for generating a single sample, it is quite inefficient. Although MIPS and other near-neighbor queries are sub-linear time operations, they still have a significant cost. In theory, the time complexity is $N^\rho$ where $\rho < 1$. Moreover, the accuracy is very sensitive to the approximation of the maximum value [see Section~\ref{sec:Gumbel_Flaws} for details]. We empirically demonstrate that the cost of estimating partition function using a MIPS query per sample is not only inaccurate but also prohibitively slow.

{\bf 3. Heuristic-Based:}  There are other approaches that avoid estimating the partition function completely. Instead, they approximate the original log-linear model with an altogether different model, which is cheaper to train. The most popular is the Hierarchical Softmax~\cite{morin2005hierarchical}. Changing the model's assumption based on some heuristic hierarchy may not be desirable in many application, and its effect on the accuracy is not very well understood. It is further known that such models are sensitive to the structure of the Hierarchical Softmax approach \cite{mikolov2013distributed}

{\bf Focus of this paper:} The focus of this article is on an efficient partition function estimation in log-linear models without any additional assumptions. We will focus on simple, efficient, and unbiased estimators with superior properties. We want to retain the original modeling assumption, and therefore, we will not focus on techniques like hierarchical softmax, which has an explicit hierarchical assumption. All these assumptions are unreliable. In light of our goals, we will ignore (3) Heuristic based techniques and only focus on (1) Classical Sampling or Monte Carlo Based and (2) Estimation via Gumbel-Max Trick as our baselines.

{\bf Our Contributions:} Our proposal also exploits the MIPS (Maximum Inner Product Search) data structure. The key difference is that the existing approaches \cite{mussmann2016learning} require a MIPS query (relatively costly) to generate each informative sample. On the other hand, our approach can generate a large set of samples from an elegant, informative proposal distribution using a single MIPS query! We explain this difference in Section~\ref{sec:methodology}.

Our work is in fact completely different from all existing works. Instead of relying on the Gumbel-Max Trick, we return to the basics of sampling and estimation. We reveal a very unusual, but super-efficient class of samplers, which produces a set of correlated samples that are not normalized. We further show an unbiased estimator of partition function using these unusual samples. Our proposal opens a new dimension for sampling and unbiased estimation beyond classical IS, which is worth of study in its own right. The estimators are generic for any partition function estimation.

For log-linear models, we show that our sampling scheme $P_{MIPS}(y)$ has many similar properties of the target distribution, such as same modes and same peaks, making it very informative for estimation. Most interestingly, it is possible to generate $T$ samples from the magical distribution $P_{MIPS}(y)$ in sub-linear time. To the best of our knowledge, this is also the first work that constructs a provably efficient and informative sampling distribution for log-linear models. 

We show that our LSH sampler provides the perfect balance between speed and accuracy when compared to the other approaches - Uniform IS, Exact Gumbel, and MIPS Gumbel. Our LSH method is more accurate at estimating the partition function than the Uniform IS method while being equally fast. In addition, our method is several orders of magnitude faster than the Exact Gumbel and MIPS Gumbel techniques while maintaining competitive accuracy. Furthermore, our method successfully trains real-world language models accurately while  only requiring 1-2\% of the states to estimate the partition function.

\section{Background}
\label{sec:background}
\subsection{Importance Sampling}
\label{sec:IS}
Assume we have a proposal distribution $g(y)$ where $\int g(y) dy = 1$. Using the proposal distribution, we obtain an unbiased estimator of the partition function $Z_\theta$.

\begin{center}
$\mathbb{E} \Big [\frac{f(y)}{g(y)} \Big] = \sum_y g(y) \frac{f(y)}{g(y)} = \sum_y f(y) = Z_\theta$
\end{center}

We draw $N$ samples from the proposal distribution $y_i \sim g(y)$ for $i = 1 \dots N$. Using those samples, we have a Monte-Carlo approximation of the partition function $Z_\theta$.

\begin{center}
$ Z^{-1} = \mathbb{E} \big [ e^{-H} \big ] $
\end{center}

\subsection{Locality Sensitive Hashing (LSH)}
\label{sec:LSH}
Locality-Sensitive Hashing (LSH) \cite{gionis1999similarity, huang2015query, gao2014dsh, shinde2010similarity} is a popular, sub-linear time algorithm for approximate nearest-neighbor search. The high-level idea is to place similar items into the same bucket of a hash table with high probability. An LSH hash function maps an input data vector to an integer key - $h(x): \mathbb{R}^D \mapsto [0, 1, 2,\dots, N]$. A collision occurs when the hash values for two data vectors are equal - $h(x) = h(y)$. The collision probability of most LSH hash functions is generally a monotonic function of the similarity - $Pr[h(x) = h(y)] = \mathcal{M}(sim(x,y))$, where $\mathcal{M}$ is a monotonically increasing function. Essentially, similar items are more likely to collide with each other under the same hash fingerprint. 

\begin{figure} [ht]
\begin{center}
  \includegraphics[width=0.45\textwidth]{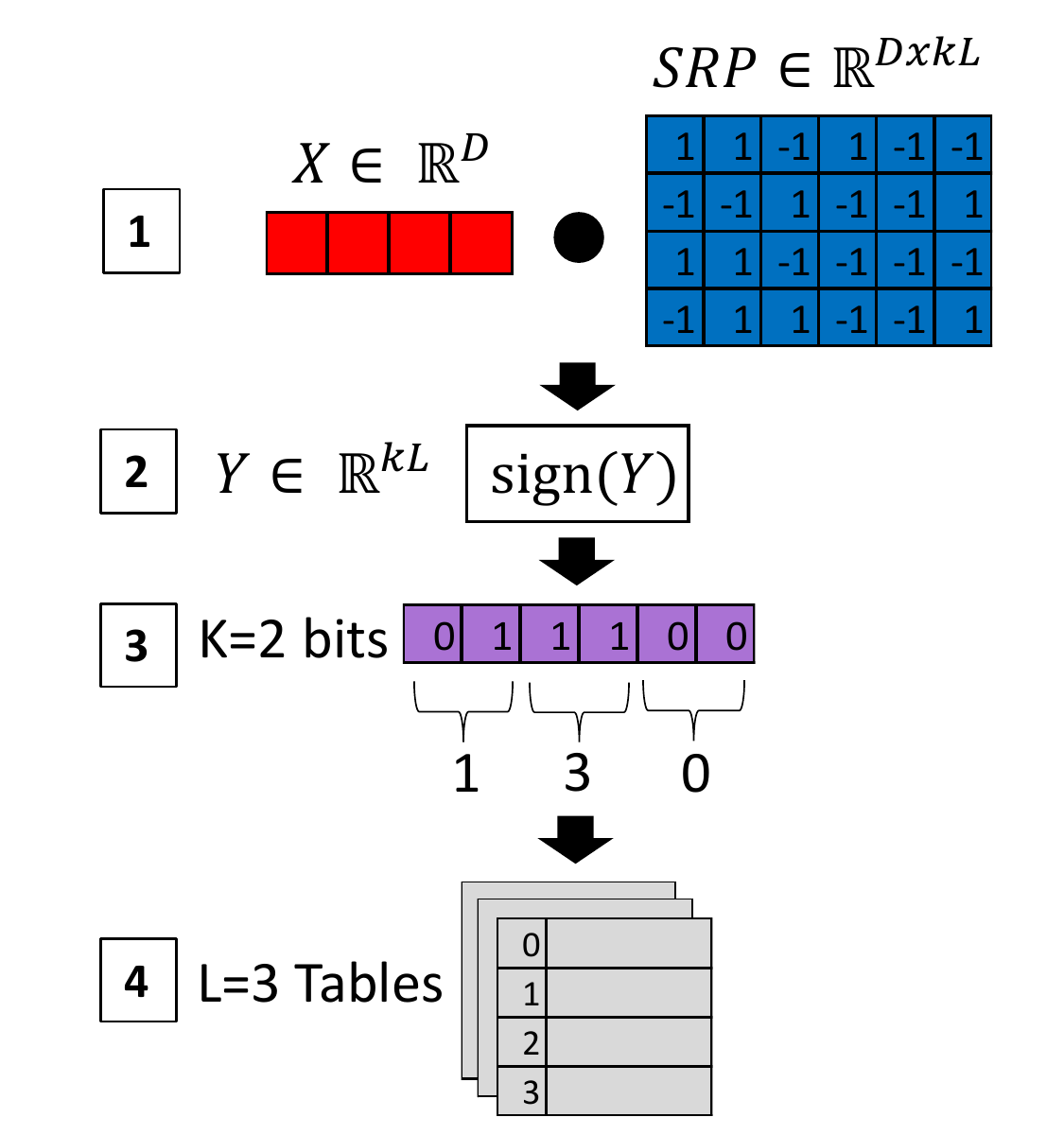}
\end{center}
\vspace{-0.2in}
\caption{
{\bf Locality Sensitive Hashing - Signed Random Projections}
{\bf (1)} Compute the projection using a signed, random matrix $\mathcal{R}^{D \times kL}$ and the item $x \in \mathcal{R}^D$.
{\bf (2)} Generate a bit from the sign of each entry in the projection $\mathcal{R}^{kL}$
{\bf (3)} From the $kL$ bits, we create $L$ integer fingerprints with $k$ bits per fingerprint.
({\bf 4)} Add the item $x$ into each hash table using the corresponding integer key
}
\label{fig:lsh_pipeline}
\end{figure}

The algorithm uses two parameters - $(K, L)$. We construct $L$ independent hash tables from the collection $\mathcal{C}$. Each hash table has a meta-hash function $H$ that is formed by concatenating $K$ random independent hash functions from $\mathcal{F}$. Given a query, we collect one buckets from each hash table and return the union of $L$ buckets. Intuitively, the meta-hash function makes the buckets sparse (less crowded) and reduces the amount of false positives because only valid nearest-neighbor items are likely to match all $K$ hash values for a given query. The union of the $L$ buckets decreases the number of false negatives by increasing the number of potential buckets that could hold valid nearest-neighbor items.

The candidate generation algorithm works in two phases (See~\cite{Report:E2LSH} for details):
\begin{enumerate}
    \itemsep0em
    \item {\bf Pre-processing Phase:} We construct $L$ hash tables from the data by storing all elements $x \in \mathcal{C}$. We only store pointers to the vector in the hash tables because storing whole data vectors is very memory inefficient.
    \item {\bf Query Phase:} Given a query $Q$, we will search for its nearest-neighbors. We report the union from all of the buckets collected from the $L$ hash tables. Note, we do not scan all the elements in $\mathcal{C}$, we only probe $L$ different buckets, one bucket for each hash table. 
\end{enumerate}
After generating the set of potential candidates, the nearest-neighbor is computed by comparing the distance between each item in the candidate set and the query.

\subsection{SimHash}
\label{sec:SimHash}
SimHash or Signed Random Projection (SRP)~\cite{charikar2002similarity} is the LSH family for the Cosine Similarity metric. The Cosine Similarity metric is the angle between two vectors $x,y \in \mathcal{R}^D$. Since the definition of an inner product is $X \cdot Y = \Sigma_{i=0}^{N} x_i \cdot y_i = \norm{X}\norm{Y}cos(\theta)$, the simple formula for the angle between two vectors is $\theta = cos^{-1}(\frac{x \cdot y}{\norm{x}\norm{y}})$. For a vector $x$, the SRP function generates a random hyperplane $w$ and returns the sign of the projection of $x$ onto $w$. Two vectors share the same sign only if the random projection does not fall in-between them. Since all angles are equally likely for a random projection, the probability that two vectors x, y share the same sign for a given random projection is $1-\frac{\theta}{\pi}$. Using the signed random projection hash function, we can create an $(\theta_1,\theta_2,1- \frac{\theta_1}{\pi},1-\frac{\theta_2}{\pi})$-sensitive LSH family.

\section{MIPS Reduction using the Gumbel Distribution}
\subsection{Gumbel Distribution}
\label{sec:Gumbel}
 The Gumbel distribution $\mathcal{G}$ \cite{gumbel1941} is a continuous probability distribution with the cumulative distribution function (CDF) - $P(G \leq x) = e^{-e^{-\frac{x - \mu}{\beta}}}$


A key technique that uses the Gumbel distribution is the Gumbel-Max Trick. \cite{gumbel1954statistical}

\begin{center}
$H = \underset{y \in Y}{\max} [\phi_y + G(y)] \sim \log{(\sum_{y \in Y} e^{\phi_y})} + G$ 
\end{center}

where $\phi_y = \theta_y x$ is the log probability of the log-linear model, $G(y)$ is an independent Gumbel random variable for each state $y$, and $G$ is an independent Gumbel random variable. Using the Gumbel-Max Trick, we can estimate the inverse partition function $Z_\theta$. 

\begin{center}
$ Z^{-1} = \mathbb{E} \big [ e^{-H} \big ]$

$ \hat{Z}^{-1} = \frac{1}{N} \sum_{i}^{N} \big [ e^{-H_i} \big ] $
\end{center}

\cite{mussmann2016learning} proposed using two algorithms, Maximum Inner Product Search (MIPS) and Gumbel-Max Trick, to estimate the partition function $Z_\theta$ efficiently. Their idea was to convert the Gumbel-Max Trick into a Maximum Inner Product Search (MIPS) problem. We will provide a brief overview of their approach.

\subsection{Algorithm}
For their MIPS reduction, the first step is to build the MIPS data structure. A vector of $k$ independent Gumbel random variables is concatenated to each weight vector $\theta_y$ to form the Gumbel-weight vector $v_y = (\theta_y, \{G_{y,j}\}_{j=1}^{k})$.
These Gumbel-weight vectors $v_y$ are added to the MIPS data structure. During the partition estimate phase, a subset of new weight vectors is queried from the MIPS data structure. A one-hot vector $e_j$ where $j = 1 \dots k$ is concatenated to the feature vector $x$ to form the query $q_j = (x, e_j)$.
The one-hot vector ensures that only the $j^{th}$ Gumbel variable is selected at one time. The MIPS data structure returns a subset of Gumbel-weight vectors $S_j$ that is likely to produce the maximum inner product with the query $q_j$. The exact maximum inner product is computed for this small subset of Gumbel-weight vectors $S_j$ and the query $q_j$. The overall process is still sub-linear time, reducing the search space for the exact Gumbel-Max trick and improving the performance of the partition function estimation process. The efficiency comes at the cost of approximate answers to MIPS queries.

\begin{figure}[t]
\begin{center}
  \includegraphics[width=0.45\textwidth]{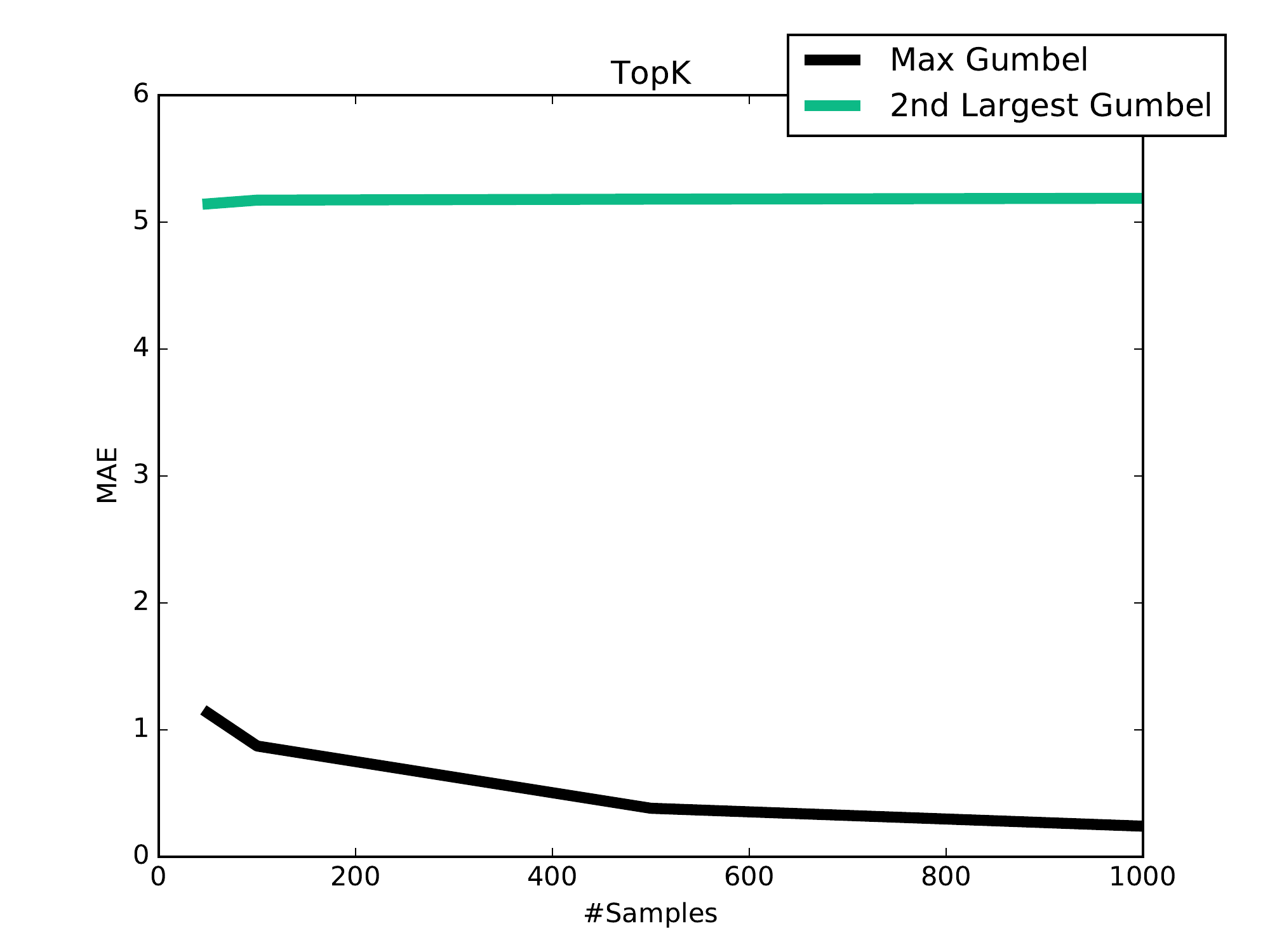}
\end{center}
\vspace{-0.25in}
\caption{
{\bf (1)} Exact Max Gumbel - Estimate the partition function using the maximum value over all states
{\bf (2)} 2nd Largest Gumbel - The maximum (top-1) value is replaced with the 2nd largest (top-2) value in the partition function estimate.
Notice, there is a large gap in accuracy when the 2nd largest value is used to estimate the partition function. Therefore, using the maximum value for the estimate is a requirement for an accurate estimate.
}
\label{fig:topk}
\end{figure}

\subsection{MIPS-Gumbel Reduction is Inaccurate and Inefficient}
\label{sec:Gumbel_Flaws}
The MIPS-Gumbel Reduction has two main weaknesses in terms of speed and accuracy. The MIPS data structure obtains an approximate nearest-neighbor set for a query. Due to the randomness in the MIPS data structure, there is a chance that the set may not contain the exact maximum value. In Figure \ref{fig:topk}, we empirically explore the accuracy of the partition function estimate when the $2^{nd}$ largest value is substituted for the exact maximum value. It shows that missing the exact maximum value drastically increases the error of the partition function estimate. Since the estimates are extremely sensitive to perturbations in the maximum value, it is necessary to have a high confidence MIPS algorithm. It is well know that high confidence search is inefficient, requiring large number of hash functions and tables.

In addition, there is another issue that affects the performance of the MIPS-Gumbel Reduction. Each sample requires querying the MIPS data structure for a subset $S_j$ and then taking the maximum inner product between the subset and the query. This cost of approximate MIPS query, although sub-linear, is still a significant fraction of the $N$.  Furthermore, we need large enough $T$ for accurate estimation, which amounts to $T$ MIPS queries, which is likely to be inefficient. Our evaluations clearly validates the inefficiency of this approach on large datasets.

\section{Key Observation: LSH is an Efficient Informative Sampler in Disguise}
The traditional LSH algorithm retrieves a subset of potential candidates for a given query in sub-linear time. We compute the actual distances of these neighbors for this candidate subset and then report the closest nearest-neighbor. A close observation reveals that an item returned as candidate from a $(K,L)$ parametrized LSH algorithm is sampled with probability $1 - (1 - p^K)^L$ where $p$ is the collision probability of LSH function. 

For the classic LSH algorithm, the probability of retrieving any item $y$ for a given query context $x$ can be computed exactly as follows \cite{leskovec2014mining}:
\begin{enumerate}
    \itemsep-0.3em
    \item The probability that the hash fingerprints match for a random LSH function - $Pr[h(x) = h(y)] = p$
    \item The probability that the hash fingerprints match for a meta-LSH function - $Pr[H(x) = H(y)] = p^k$
    \item The probability that there is at least one mismatch between the $K$ hash fingerprints that compose the meta-LSH function - $Pr[H(x) \neq H(q)] = 1 - p^k$
    \item The probability that none of the $L$ meta-hash fingerprints match - $Pr[H(x) \neq H(y)] = (1 - p^k)^L$
    \item The probability that at least one of the $L$ meta-hash fingerprints match and the two items are a candidate pair - $Pr[H(x) = H(y)] = 1 - (1 - p^k)^L$
\end{enumerate}
The precise form of $p$ is defined by the LSH family used to build the hash tables. We can construct a MIPS hashing scheme such that $p = \mathcal{M}(q \cdot x) =  \mathcal{M}(\theta_y \cdot x)$ where $\mathcal{M}$ is a monotonically increasing function. 

However, the traditional LSH algorithm does not represent a valid probability distribution $\sum_{i=1}^N Pr(y_i) \neq 1$. Also, due to the nature of LSH, the sampled candidates are likely to be very correlated. Thus, standard techniques like IS are not applicable to this kind of samples. It turns out that there is a simple, unbiased estimator for the partition function using the samples from the LSH algorithm. We take a detour to define a general class of sampling and partition function estimators where the LSH sampling is a special case.

\section{A New Class of Estimators for Partition Function}
Assume there is a set of states $Y = [y_1 \dots y_N]$. We associate a probability value with each state $[p_1 \dots p_N]$.
Define the sampling process as follows: 

We select each of these states $y_i$ to be a part of the sample set $S$ with probability $p_i$. Note, the probabilities need not sum to 1, and the sampling process is allowed to be correlated. Thus, we get a correlated sample set $S$. It can be seen that MIPS sampling is a special class of this sampling process with $p_i = 1 - (1 - p^k)^L$. 

Given the sample set $S$, we have an unbiased estimator for any partition function $\sum_{y_i \in Y} f(y_i)$.
\begin{theorem}
\label{lsh_sampler}
Assume that every state $y_i$ has a weight given by $f(y_i)$ with partition function $\sum_{y_i \in Y} f(y) = Z_\theta$. Then we have the following as an unbiased estimator of $Z_\theta$:
\begin{align}
  Est &= \sum_{y_i \in S} \frac{f(y_i)}{p_i} = \sum_{i =1}^N {\bf 1}_{y_i \in S} \cdot \frac{f(y_i)}{p_i}\\
\mathbb{E}[Est] &= \sum_{i=1}^{N} f(y_i) = Z_\theta
\end{align}
\end{theorem}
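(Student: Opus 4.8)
The plan is to recognize this as a Horvitz--Thompson type estimator and to prove unbiasedness purely through linearity of expectation, which sidesteps the correlation in the sampling process entirely. First I would work with the estimator in its indicator form, $Est = \sum_{i=1}^N \mathbf{1}_{y_i \in S} \cdot \frac{f(y_i)}{p_i}$, which the statement already provides as the second equality. The advantage of this form is that it expresses $Est$ as a fixed linear combination of the $N$ Bernoulli indicator random variables $\mathbf{1}_{y_i \in S}$, with deterministic coefficients $f(y_i)/p_i$. All the randomness is now isolated in the indicators.

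Next I would apply linearity of expectation. Since the coefficients $f(y_i)/p_i$ are constants, not random, we obtain $\mathbb{E}[Est] = \sum_{i=1}^N \frac{f(y_i)}{p_i} \cdot \mathbb{E}[\mathbf{1}_{y_i \in S}]$. The crucial step is the observation that $\mathbb{E}[\mathbf{1}_{y_i \in S}] = \Pr[y_i \in S] = p_i$ by the definition of the sampling process, in which each state $y_i$ is marginally included in $S$ with probability exactly $p_i$. Substituting this cancels the $p_i$ factors term by term, giving $\mathbb{E}[Est] = \sum_{i=1}^N \frac{f(y_i)}{p_i} \cdot p_i = \sum_{i=1}^N f(y_i) = Z_\theta$, as claimed.

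The reason there is no real obstacle here, and the point I would emphasize in the writeup, is that linearity of expectation holds irrespective of any dependence among the indicator variables. Only the marginal inclusion probabilities $p_i$ enter the calculation, never the joint distribution of the $\mathbf{1}_{y_i \in S}$. This is precisely why the estimator stays unbiased even though the LSH sampler produces highly correlated candidates and the $p_i$ need not sum to one: correlation and the lack of proper normalization affect only the \emph{variance} of $Est$, not its \emph{mean}. I would also remark that each $p_i$ must be strictly positive for the estimator to be well-defined, which is guaranteed in the MIPS instance since $p_i = 1 - (1 - p^k)^L > 0$ whenever the collision probability $p > 0$.
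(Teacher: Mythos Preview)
Your proof is correct and is exactly the standard Horvitz--Thompson argument; the paper in fact does not spell out a proof for this theorem, but the key identity you use, $\mathbb{E}[\mathbf{1}_{y_i\in S}]=p_i$, is precisely the one the paper invokes in its variance derivation. Your additional remarks on why correlation is irrelevant for the mean and on the need for $p_i>0$ are accurate and appropriate.
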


\begin{theorem}
The variance of the partition function estimator is:
\begin{align}
Var[Est] &= \sum_{i=1}^{N} \frac{f(y_i)^2}{p_i} - \sum_{i=1}^{N} f(y_i)^2 \\
& + \sum_{i\ne j} \frac{f(y_i)f(y_j)}{p_ip_j} \mathrm{Cov}({\bf 1}_{[y_i \in S]} \cdot {\bf 1}_{[y_j \in S]})
\end{align}
If the states are selected independently, then we can write the variance as:
$$ Var[Est] = \sum_{i=1}^{N} \frac{f(y_i)^2}{p_i} - \sum_{i=1}^{N} f(y_i)^2 $$
\end{theorem}

{\bf Note 1:} In general, this sampling process is inefficient. We need to flip coins for every state in order to generate the sample set $S$.
For log-linear models with feature vector $x$ and function $f(y_i) = e^{\theta_{y_i} \cdot x}$, we show a particular form of probability $p_i = 1 - (1 - \mathcal{M}(\theta_{y_i} \cdot x)^k)^L$) for which this sampling scheme is very efficient. In particular, we can efficiently sample for a sequence of queries with varying $x$ in amortized near-constant time [See Section~\ref{sec:LSH}].

{\bf Note 2:} In our case, where these probabilities $p_i$ are generated from LSH (or ALSH for MIPS), the term $\sum_{i\ne j} \frac{f(y_i)f(y_j)}{p_ip_j} \mathrm{Cov}({\bf 1}_{[y_i \in S]} \cdot {\bf 1}_{[y_j \in S]})$ contains very large negative terms. For each dissimilar pair $y_i, y_j$, the term $\mathrm{Cov}({\bf 1}_{[y_i \in S]} \cdot {\bf 1}_{[y_j \in S]})$ is negative. When ${\bf 1}_{[y_i \in S]} = 1$ and ${\bf 1}_{[y_j \in S]} = 1$, it implies that $y_i$ and $y_j$ are both similar to the query. Therefore, they are similar to each other due to triangle inequality~\cite{charikar2002similarity}. Thus, for random pairs $y_i, y_j$, the covariance will be negative. i.e. If $y_i$ is sampled, then $y_j$ has less chance of being sampled and vice versa. Hence, we can expect the overall variance with LSH-based sampling to be significantly lower than uncorrelated sampling. This is something unique about LSH, and so it is super-efficient and its correlations are beneficial.

\subsection{Why is MIPS the correct LSH function for Log-Linear Models?}
The terms $\sum_{i=1}^{N} \frac{f(y_i)^2}{p_i} $ in the variance is similar in nature to the $\chi^2(f||p)$ term in the variance of Importance Sampling (IS) \cite{liu2015estimating}. The variance of the IS estimate is high when the target $f$ and the proposal $p$ distributions are peaked differently. i.e. they give high mass to different parts of the sample space or have different modes Therefore, for similar reasons as importance sampling, our scheme is likely to have low variance when $f$ and $p_i$ are aligned. It should be noted that there are very specific forms of probability $p_i$ for which the sampling is efficient. We show that with the MIPS LSH function, the probabilities $p_i$ and the function $f(y_i) = e^{\theta_{y_i} \cdot x}$ align well.

We have the following relationship between the probability of each state $p_i$ and the Log-Linear unnormalized target distribution $P(y|x,\theta)$.
\begin{theorem}
\label{eq:sample_prob}
  For any two states $y_1$ and $y_2$:
  $$ P(y_1|x;\theta) \ge P(y_2|x;\theta) \iff p_1 \ge p_2$$ 
  where 
  $$p_i = 1 - (1 - \mathcal{M}(\theta_{y_i} \cdot x)^K)^L$$ 
  $$P(y|x,\theta) \propto e^{\theta_y \cdot x}$$
\end{theorem}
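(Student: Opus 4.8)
The plan is to show that the sampling probability $p_i$ is a (strictly) increasing function of the single scalar $\theta_{y_i}\cdot x$, which is precisely the quantity that controls the unnormalized target $P(y_i|x;\theta)$. Since both sides of the claimed equivalence are monotone in this one scalar, the $\iff$ will fall out immediately.

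First I would reduce the target comparison. Because $P(y|x;\theta)\propto e^{\theta_y\cdot x}$ and the exponential is strictly increasing,
$$ P(y_1|x;\theta)\ge P(y_2|x;\theta)\iff \theta_{y_1}\cdot x\ge \theta_{y_2}\cdot x. $$
Next I would unravel the definition of $p_i$ as a composition of monotone maps applied to $a_i:=\theta_{y_i}\cdot x$. Writing $m_i=\mathcal{M}(a_i)$, the crucial observation is that $\mathcal{M}$ is a collision probability, so $m_i\in[0,1]$; this keeps every power map that follows monotone. I would then track monotonicity layer by layer: $\mathcal{M}$ increasing makes $m_i$ increasing in $a_i$; since $m_i\ge 0$, the map $t\mapsto t^K$ makes $m_i^K$ increasing; hence $1-m_i^K$ is decreasing and still lies in $[0,1]$; since it is nonnegative, $t\mapsto t^L$ makes $(1-m_i^K)^L$ decreasing; and finally $p_i=1-(1-m_i^K)^L$ is increasing in $a_i$. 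Therefore
$$ p_1\ge p_2\iff a_1\ge a_2. $$

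Chaining the two equivalences yields the theorem. The argument is essentially a bookkeeping exercise in composing monotone functions, so I do not expect a serious obstacle; the only point requiring care — and the reason the hypotheses matter — is justifying that $\mathcal{M}(\theta_{y_i}\cdot x)\in[0,1]$, so that $t\mapsto t^K$ and $t\mapsto t^L$ are monotone on the relevant domain and $1-m_i^K$ remains nonnegative. If one wants the strict form of the equivalence, I would additionally invoke strict monotonicity of $\mathcal{M}$ together with $0<m_i<1$, ensuring that none of the intermediate maps flatten out and collapse a strict inequality into an equality.
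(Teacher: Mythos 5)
Your proof is correct and follows essentially the same route as the paper's, which simply cites monotonicity of $e^{x}$ and of $x \mapsto 1-(1-\mathcal{M}(x)^K)^L$ and concludes the two rankings coincide. Your version is more careful than the paper's one-line argument — in particular, you rightly note that the $\iff$ (not just one implication) requires the composite map to be \emph{strictly} increasing, i.e.\ strict monotonicity of $\mathcal{M}$, a point the paper glosses over.
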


\begin{corollary}
The modes of both the sample and the target distributions are identical.
\end{corollary}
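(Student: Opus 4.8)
The plan is to derive this corollary directly from the order-preserving equivalence already established in Theorem~\ref{eq:sample_prob}, so almost all of the work is inherited rather than redone. First I would fix the feature vector $x$ and pin down what ``mode'' means on each side: the mode of the target is any maximizer $\arg\max_{y \in Y} P(y\mid x;\theta)$, while the mode of the sampling scheme is the state most likely to be drawn, i.e.\ any maximizer $\arg\max_i p_i$, where $p_i = 1 - (1 - \mathcal{M}(\theta_{y_i}\cdot x)^K)^L$ is the marginal probability that $y_i$ enters the sample set. Making this definition explicit is worthwhile because $p_i$ is an unnormalized, correlated inclusion probability rather than a genuine distribution, so ``mode'' must be read here as ``maximal inclusion probability.''

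Next I would invoke Theorem~\ref{eq:sample_prob}, which supplies $P(y_1\mid x;\theta) \ge P(y_2\mid x;\theta) \iff p_1 \ge p_2$ for every pair of states. The key point to extract is that this two-sided inequality is in fact a full order-isomorphism between the value sets $\{P(y_i\mid x;\theta)\}$ and $\{p_i\}$: reading the equivalence in both directions also forces $P(y_1\mid x;\theta) = P(y_2\mid x;\theta) \iff p_1 = p_2$, so the two induced rankings of the states agree exactly, ties included.

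From here the conclusion is immediate. I would take any mode $y^\star$ of the target, so that $P(y^\star\mid x;\theta) \ge P(y\mid x;\theta)$ for all $y \in Y$; applying the equivalence termwise yields $p_{y^\star} \ge p_y$ for all $y$, so $y^\star$ is also a mode of the sampling scheme, and the reverse inclusion follows by the symmetric argument. Hence the two sets of maximizers coincide and the modes are identical.

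I do not expect any genuine obstacle here, since the monotone relationship that does the real work is supplied by Theorem~\ref{eq:sample_prob}. The only steps needing care are cosmetic: stating the mode of the correlated, unnormalized sampler unambiguously as $\arg\max_i p_i$, and observing that because the equivalence preserves equalities it preserves the \emph{entire} argmax set rather than merely guaranteeing a single shared maximizer, which is what lets the claim survive ties.
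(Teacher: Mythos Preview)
Your proposal is correct and matches the paper's approach: the paper treats the corollary as an immediate consequence of Theorem~\ref{eq:sample_prob}, noting only that the shared ranking of states under both probabilities forces the modes to coincide, and you derive it the same way. Your write-up is in fact more careful than the paper's, since you make explicit what ``mode'' means for the unnormalized inclusion probabilities and verify that the entire argmax set (ties included) is preserved, whereas the paper leaves these points implicit.
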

Therefore, we can expect hashing for MIPS to be a good choice for low variance.

\section{Estimate Partition Function using LSH Sampling}
\label{sec:methodology}
The combination of these observations is a fast, scalable approach for estimating the partition function of Log-linear models. The pseudo-code for this process is shown in Algorithms \ref{alg:initialize} and \ref{alg:estimate}.

Here is an overview of our LSH sampling process:  
\begin{enumerate}
    \itemsep-0.3em
    \item During the pre-processing phase, we use randomized hash functions to build hash tables from the weight vectors $\theta_y$ for each state $y \in Y$.
    \item For each partition function estimate, we sample weight vectors from the hash tables with probability proportional to the unnormalized density of the weight vector for the state and the feature vector $e^{\theta_y \cdot x}$.
    \item For each weight vector $\theta_y$ in the sample set $S$, we determine $p$, the probability of a hash collision with the feature vector $x$. This probability is dependent on the LSH family used to build the hash tables. We chose an LSH family such that the probability $p$ is monotonic with respect to the inner product $\theta_y \cdot x$
    \item The partition function estimate for the feature vector $x$ is the sum of each weight vector $\theta_y$ in the sample set $S$ weighted by the hash collision probability $p$. 
    
    \begin{center}
    $ \hat{Z_\theta} = \sum_{i=1}^{N} 1_{[y_i \in S]} \cdot \frac{f(y_i)}{p_i} $
    \end{center}
\end{enumerate}

{\bf Running Time:} There is a key distinction in performance between the MIPS-Gumbel Reduction and our LSH sampler. The MIPS-Gumbel Reduction needs to query the MIPS data structure for each individual sample. Our LSH sampler uses a single query to the LSH data structure to retrieve the entire sample set $S$ for the partition function estimate.  Our sampling is roughly constant time. 

In terms of complexity, the MIPS-Gumbel Reduction requires $T$ full nearest-neighbor queries, which includes the costly filtering of retrieved candidates.  While the running time for our LSH Sampling estimate is the cost of a single LSH query for all the samples. In section \ref{sec:accuracy_speed}, we support our complexity analysis with empirical results, which our LSH Sampling estimate is significantly faster than the MIPS-Gumbel Reduction.

\begin{algorithm}[tb]
   \caption{LSH Sampling - Initialization}
   \label{alg:initialize}
\begin{algorithmic}
   \STATE 
   \STATE {\bf Input:} $[\theta_y]_{y \in Y}$ weight vectors, k, L
   \STATE HT = Create(k, L)
   \FOR {\textbf{each} $y \in Y$}
   \STATE Insert(HT, $\theta_y$)
   \ENDFOR
   \STATE {\bf Return:} HT
\end{algorithmic}
\end{algorithm}

\begin{algorithm}[tb]
   \caption{LSH Sampling - Partition Estimate}
   \label{alg:estimate}
\begin{algorithmic}
   \STATE {\bf Input:} 
   \STATE LSH data structure HT, k, L
   \STATE $[\theta_y]_{y \in Y}$ weight vectors
   \STATE $x$ feature vector
   \STATE $p(x,y)$ --- LSH Collision Probability
   \\\hrulefill
   \STATE union = query(HT, x)
   \STATE total = 0
   \FOR {{\bf each} $y \in union$}
   \STATE $weight = 1 - (1 - p(x,y)^k)^L$
   \STATE $logit = e^{\theta_y \cdot x}$
   \STATE $total \mathrel{+}= \frac{logit}{weight}$
   \ENDFOR
   \STATE {\bf Return:} $\hat{Z_\theta} = total$
   \STATE
\end{algorithmic}
\end{algorithm}

\section{Experiments}
\label{sec:experiments}
We design experiments to answer the following four important questions: 
\begin{enumerate}
    \itemsep-0.3em
    \item How accurately does our LSH Sampling approach estimate the partition function?
    \item What is the running time of our Sampling approach?
    \item How does our Sampling approach compare with the alternative approaches in terms of speed and accuracy?
    \item How does using our LSH Sampling approach affect the accuracy (perplexity) of real-world language models?
\end{enumerate}

For evaluation, we implemented the following three approaches to compare and contrast against our approach.
\begin{itemize}
    \itemsep-0.3em
    \item Uniform Importance Sampling: An IS estimate where the proposal distribution is a uniform distribution U[0, N]. All samples are weighted equally.
    \item Exact Gumbel: The Max-Gumbel Trick is used to estimate the partition function. The maximum over all of the states is used for this estimate.
    \item MIPS Gumbel \cite{mussmann2016learning}: A MIPS data structure is used to collect a subset of the states efficiently. This subset contains the states that are most likely to have a large inner product with the query. The Max-Gumbel Trick estimates the partition function using the subset instead of all the states.
\end{itemize}

\subsection{Datasets}
\begin{itemize}
    \itemsep-0.3em
    \item Penn Tree Bank (PTB) \cite{marcus1993building} \footnote{\url{http://www.fit.vutbr.cz/˜imikolov/rnnlm/simple-examples.tgz}} - This dataset contains a vocabulary of 10K words. It is split into 929k training words, 73k validation words, and 82k test words.
    \item Text8 \cite{mikolov2014learning} \footnote{\url{http://mattmahoney.net/dc/text8.zip}}- This dataset is a preprocessed version of the first 100 million characters from Wikipedia. It is split into a training set (first 99M characters) and a test set (last 1M characters)  It has a vocabulary of 44k words.
\end{itemize}

\subsection{Training Language Models}
The goal of a neural network language model is to predict the next word in the text given the previous history of words. The performance of language models is measured by its perplexity. The perplexity score $e^{loss}$ measures how well the language model is likely to predict a word from a dataset. The loss function is the average negative log likelihood for the target words. 

$$loss = -\frac{1}{N} \sum_{i=1}^{N} \log{p_{y_i}}$$

For our experiments, our language model is a single layer LSTM with 512 hidden units. The size of the input word embeddings is equal to the number of hidden units. The model is unrolled for 20 steps for back-propagation through time (BPTT). We use the Adagrad optimizer with an initial learning rate of 0.1 and an epsilon parameter of 1e-5. We also clip the norm of the gradients to 1. The models are trained for 10 epochs with a mini-batch size of 32 examples. The output layer is a softmax classifier that predicts the next word in the text using the context vector $x$ generated by the LSTM. The entire vocabulary for the text is the state space $Y$ for the softmax classifier.

In this experiment, we test how well the various approaches estimate the partition function by training a language model. At test time, we measure the effectiveness of each approach by using the original partition function and comparing the model's perplexity scores. The settings for our LSH data structure were k=10 bits and L=16 tables. Using these settings, our approach samples around 1.5-2\% of the entire vocabulary for its partition function estimate. (i.e. 200 samples - PTB, 800 samples - Text8) The Uniform IS estimate uses the same number of samples as our LSH estimate. For each Exact Gumbel estimate, we randomly sample 50 out of 1000 Gumbel random variables. For the MIPS Gumbel approach, we use 50 samples per estimate and an LSH data structure with k=5 bits and L=16 tables that collects around 35\% of the entire vocabulary per sample.

From Table \ref{fig:LM}, our approach closely matches the accuracy of the standard partition function with minimal error. In addition, the poor estimate from the Uniform IS approach results in terrible performance for the language model. This highlights the fact that an accurate, stable estimate of the partition function is necessary for successfully training of the log-linear model. The Exact Gumbel approach is the most accurate approach but is significantly slower than our LSH approach. The MIPS Gumbel approach diverged during training because of its poor accuracy in estimating the partition function.


\begin{figure} [ht]
\begin{center}
    \begin{tabular}{ |c|c|c| p{1.25cm} | p{1.25cm}| } 
    \hline
    Standard & LSH & Uniform & Exact Gumbel & MIPS Gumbel \\
    \hline
    91.8 & 98.8 & 524.3 & 91.9 & Diverged \\ 
    140.7 & 162.7 & 1347.5 & 152.9 & \\
    \hline
    \end{tabular}
\end{center}
\vspace{-0.1in}
\caption{Language Model Performance (Perplexity) for the PTB \textbf{(Top)} and Text8 \textbf{(Bottom)} datasets.}
\label{fig:LM}
\end{figure}

\subsection{Accuracy and Speed of Estimation}
\label{sec:accuracy_speed}
For this experiment, we take a snapshot of the weights $\theta_y$ and the context vector $x$, after training the language model for a single epoch.  The number of examples in the snapshot is the mini-batch size $\times$ BPTT steps. i.e. (32 examples x 20 steps = 640 total) Using the snapshot, we show how well each approach estimates the partition function in Figure \ref{fig:Accuracy}. The x-axis is the number of samples used for the partition function estimate. The partition function is estimated with [50, 150, 400, 1000] samples for the PTB dataset and [50, 400, 1500, 5000] samples for the Text8 dataset. The accuracy of the partition function estimate is measured with the Mean Absolute Error (MAE). Tables \ref{fig:PTB_Time} and \ref{fig:Text8_Time} show the total computation time for estimating the partition function for all of the examples.

From Figure \ref{fig:Accuracy} and Tables \ref{fig:PTB_Time}, \ref{fig:Text8_Time}, we conclude the following:
\begin{itemize}
    \item Exact Gumbel is the most accurate estimate of the partition function with the lowest MAE for both datasets.
    \item MIPS Gumbel is 50\% faster than the Exact Gumbel but its accuracy is significantly worse.
    \item Exact Gumbel and LSH Gumbel are much slower than the Uniform IS and LSH approaches by several orders of magnitude.
    \item Our LSH estimate is more accurate than the Uniform IS and LSH Gumbel estimates.
    \item As the number of samples increases, the MAE for the Uniform IS and LSH estimate decreases.
\end{itemize}

\begin{figure} [ht]
\begin{center}
    \begin{tabular}{ |c|c|c|p{1.45cm}|p{1.45cm}| } 
    \hline
    Samples &	Uniform &	LSH     & Exact Gumbel &	MIPS Gumbel \\
    \hline
    50	    &	0.103	&	0.191	&	79.34	    &	45.72  \\
    150	    &	0.325	&	0.604	&	248.47	    &	140.91  \\
    400	    &	0.944	&	1.743	&	690.39	    &	406.11  \\
    1000	&	1.874	&	3.440	&	1,646.59    &   1,064.31 \\
    \hline
    \end{tabular}
\end{center}
\vspace{-0.15in}
\caption{Wall-Clock Time (seconds) for the Partition Function Estimate - PTB Dataset }
\label{fig:PTB_Time}
\end{figure}

\begin{figure} [ht]
\begin{center}
    \begin{tabular}{ |c|c|c|p{1.45cm}|p{1.45cm}| } 
    \hline
    Samples &	Uniform & LSH &	Exact Gumbel &	MIPS Gumbel     \\
    \hline
    50	    &	0.13	&	0.23	&	531.37	    &	260.75	\\
    400	    &	0.92	&	1.66	&	3,962.25	&	1,946.22	\\
    1500	&	3.41	&	6.14	&	1,4686.73	&	7,253.44	\\
    5000	&	9.69	&	17.40	&	42,034.58	&	20,668.61	\\
    \hline
    \end{tabular}
\end{center}
\vspace{-0.15in}
\caption{Wall-Clock Time (seconds) for the Partition Function Estimate - Text8 Dataset}
\label{fig:Text8_Time}
\end{figure}

\section{Discussion}
\label{sec:discussion}
In this section, we briefly discuss the implementation details. 

{\bf LSH Family}: For the experiments, we used the SimHash LSH family to estimate the partition function. [See \cite{shrivastava2014improved} for the MIPS formulation for the SimHash LSH Family] It is computationally efficient to generate the LSH fingerprints and the collision probability values $p$ for the LSH function. Generating the LSH fingerprints takes advantage of fast matrix multiplication operations while calculating the hash collision probability only requires normalizing the inner product between the weights $\theta_y$ and the context $x$. [See Figure \ref{fig:lsh_pipeline} and Section \ref{sec:SimHash}] 

{\bf Fixed sample set $S$ size}: It is often desirable to have a fixed-sized sample set $S$. However, the size of the sample set retrieved from the LSH data structure is stochastic and not directly controlled. Here is our approach for controlling the size of the sample set $S$. First, we tune the (K, L) parameters for the LSH data structure to retrieve a sample set $S$ whose size is close to the desired threshold. Then, we randomly sub-sample the sample set $S$ such that its size meets the desired threshold. The old probability is multiplied by the sampling probability to get the new values.

\begin{figure}[ht]
\begin{center}
\includegraphics[width=0.45\textwidth]{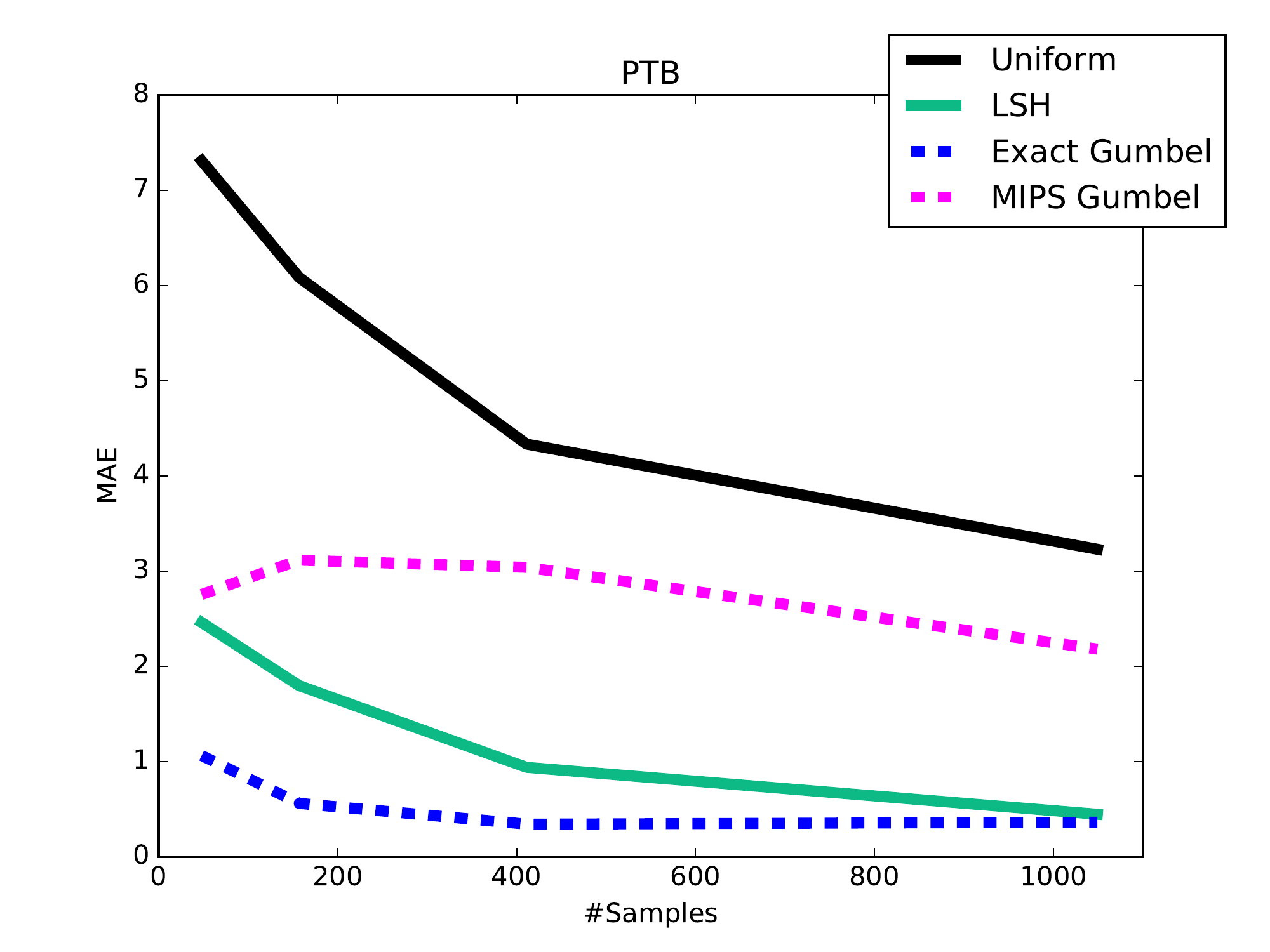}
\includegraphics[width=0.45\textwidth]{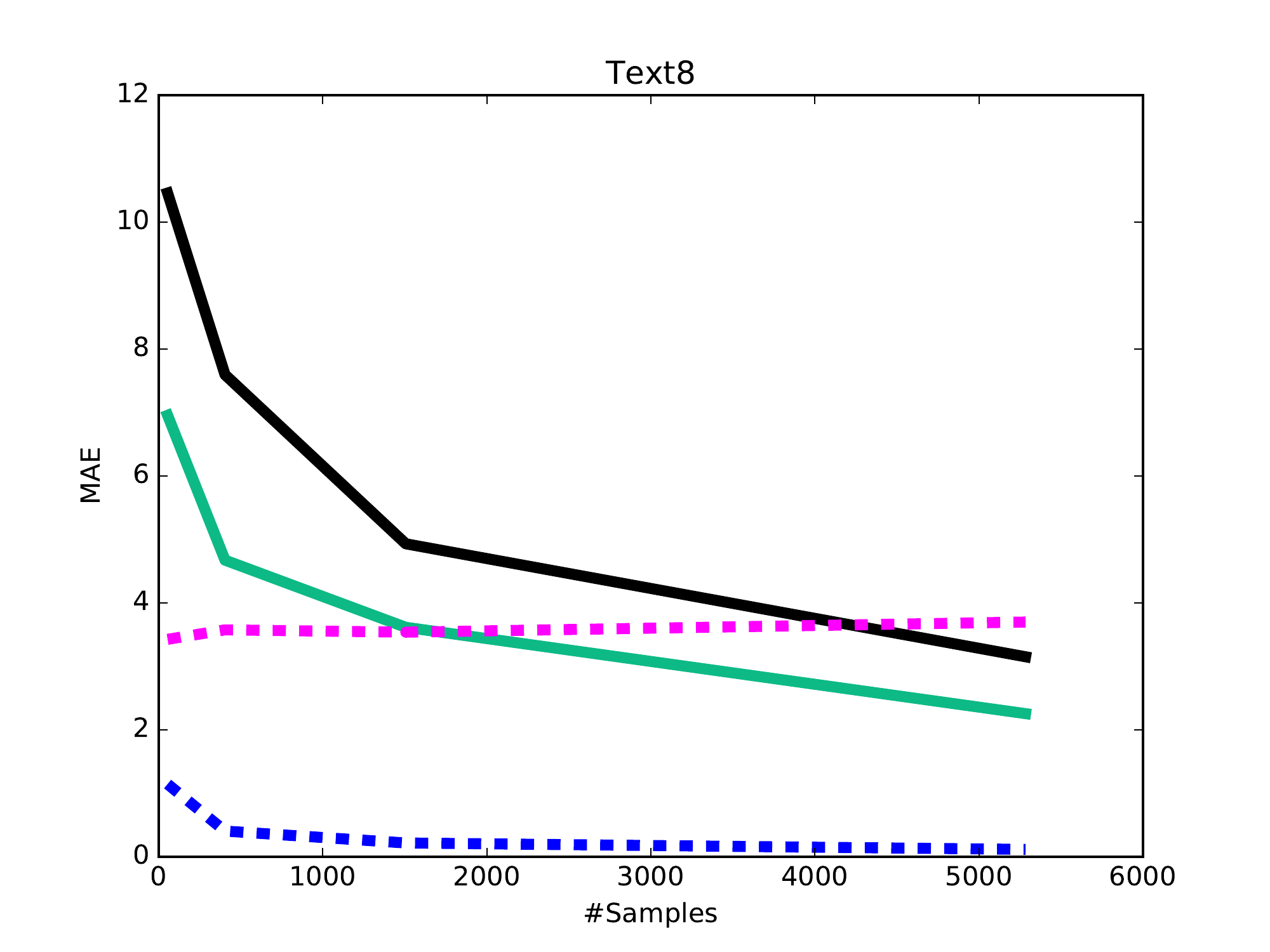}
\end{center}
\vspace{-0.25in}
\caption{Accuracy of the Partition Function estimate \\ Mean Absolute Error (MAE) - PTB \textbf{(Top)} and Text8 \textbf{(Bottom)}}
  \label{fig:Accuracy}
\end{figure}

\section{Acknowledgments}
The work of Ryan Spring was supported from NSF Award 1547433. This work was supported by Rice Faculty Initiative Award 2016.

\begin{small}
\bibliography{icml} 

\begin{thebibliography}{24}
\providecommand{\natexlab}[1]{#1}
\providecommand{\url}[1]{\texttt{#1}}
\expandafter\ifx\csname urlstyle\endcsname\relax
  \providecommand{\doi}[1]{doi: #1}\else
  \providecommand{\doi}{doi: \begingroup \urlstyle{rm}\Url}\fi

\bibitem[Andoni \& Indyk(2004)Andoni and Indyk]{Report:E2LSH}
Andoni, Alexandr and Indyk, Piotr.
\newblock E2lsh: Exact euclidean locality sensitive hashing.
\newblock Technical report, MIT, 2004.

\bibitem[Charikar(2002)]{charikar2002similarity}
Charikar, Moses~S.
\newblock Similarity estimation techniques from rounding algorithms.
\newblock In \emph{Proceedings of the thiry-fourth annual ACM symposium on
  Theory of computing}, pp.\  380--388. ACM, 2002.

\bibitem[Chelba et~al.(2013)Chelba, Mikolov, Schuster, Ge, Brants, Koehn, and
  Robinson]{chelba2013one}
Chelba, Ciprian, Mikolov, Tomas, Schuster, Mike, Ge, Qi, Brants, Thorsten,
  Koehn, Phillipp, and Robinson, Tony.
\newblock One billion word benchmark for measuring progress in statistical
  language modeling.
\newblock \emph{arXiv preprint arXiv:1312.3005}, 2013.

\bibitem[Deng et~al.(2009)Deng, Dong, Socher, Li, Li, and
  Fei-Fei]{deng2009imagenet}
Deng, Jia, Dong, Wei, Socher, Richard, Li, Li-Jia, Li, Kai, and Fei-Fei, Li.
\newblock Imagenet: A large-scale hierarchical image database.
\newblock In \emph{Computer Vision and Pattern Recognition, 2009. CVPR 2009.
  IEEE Conference on}, pp.\  248--255. IEEE, 2009.

\bibitem[Gao et~al.(2014)Gao, Jagadish, Lu, and Ooi]{gao2014dsh}
Gao, Jinyang, Jagadish, Hosagrahar~Visvesvaraya, Lu, Wei, and Ooi, Beng~Chin.
\newblock Dsh: data sensitive hashing for high-dimensional k-nnsearch.
\newblock In \emph{Proceedings of the 2014 ACM SIGMOD}, pp.\  1127--1138. ACM,
  2014.

\bibitem[Gionis et~al.(1999)Gionis, Indyk, Motwani,
  et~al.]{gionis1999similarity}
Gionis, Aristides, Indyk, Piotr, Motwani, Rajeev, et~al.
\newblock Similarity search in high dimensions via hashing.
\newblock \emph{VLDB}, 99\penalty0 (6):\penalty0 518--529, 1999.

\bibitem[Goodman(2001)]{goodman2001bit}
Goodman, Joshua~T.
\newblock A bit of progress in language modeling.
\newblock \emph{Computer Speech \& Language}, 15\penalty0 (4):\penalty0
  403--434, 2001.

\bibitem[Gumbel(1941)]{gumbel1941}
Gumbel, E.~J.
\newblock The return period of flood flows.
\newblock \emph{Ann. Math. Statist.}, 12\penalty0 (2):\penalty0 163--190, 06
  1941.
\newblock \doi{10.1214/aoms/1177731747}.
\newblock URL \url{http://dx.doi.org/10.1214/aoms/1177731747}.

\bibitem[Gumbel \& Lieblein(1954)Gumbel and Lieblein]{gumbel1954statistical}
Gumbel, Emil~Julius and Lieblein, Julius.
\newblock \emph{Statistical theory of extreme values and some practical
  applications: a series of lectures}.
\newblock US Government Printing Office Washington, 1954.

\bibitem[Huang et~al.(2015)Huang, Feng, Zhang, Fang, and Ng]{huang2015query}
Huang, Qiang, Feng, Jianlin, Zhang, Yikai, Fang, Qiong, and Ng, Wilfred.
\newblock Query-aware locality-sensitive hashing for approximate nearest
  neighbor search.
\newblock \emph{Proceedings of the VLDB Endowment}, 9\penalty0 (1):\penalty0
  1--12, 2015.

\bibitem[Koller \& Friedman(2009)Koller and Friedman]{koller2009probabilistic}
Koller, Daphne and Friedman, Nir.
\newblock \emph{Probabilistic graphical models: principles and techniques}.
\newblock MIT press, 2009.

\bibitem[Lauritzen(1996)]{lauritzen1996graphical}
Lauritzen, Steffen~L.
\newblock \emph{Graphical models}, volume~17.
\newblock Clarendon Press, 1996.

\bibitem[Leskovec et~al.(2014)Leskovec, Rajaraman, and
  Ullman]{leskovec2014mining}
Leskovec, Jure, Rajaraman, Anand, and Ullman, Jeffrey~David.
\newblock \emph{Mining of massive datasets}.
\newblock Cambridge University Press, 2014.

\bibitem[Liu et~al.(2015)Liu, Peng, Ihler, and Fisher~III]{liu2015estimating}
Liu, Qiang, Peng, Jian, Ihler, Alexander, and Fisher~III, John.
\newblock Estimating the partition function by discriminance sampling.
\newblock In \emph{Proceedings of the Thirty-First Conference on Uncertainty in
  Artificial Intelligence}, pp.\  514--522. AUAI Press, 2015.

\bibitem[Marcus et~al.(1993)Marcus, Marcinkiewicz, and
  Santorini]{marcus1993building}
Marcus, Mitchell~P, Marcinkiewicz, Mary~Ann, and Santorini, Beatrice.
\newblock Building a large annotated corpus of english: The penn treebank.
\newblock \emph{Computational linguistics}, 19\penalty0 (2):\penalty0 313--330,
  1993.

\bibitem[Mikolov et~al.(2013)Mikolov, Sutskever, Chen, Corrado, and
  Dean]{mikolov2013distributed}
Mikolov, Tomas, Sutskever, Ilya, Chen, Kai, Corrado, Greg~S, and Dean, Jeff.
\newblock Distributed representations of words and phrases and their
  compositionality.
\newblock In \emph{Advances in neural information processing systems}, pp.\
  3111--3119, 2013.

\bibitem[Mikolov et~al.(2014)Mikolov, Joulin, Chopra, Mathieu, and
  Ranzato]{mikolov2014learning}
Mikolov, Tomas, Joulin, Armand, Chopra, Sumit, Mathieu, Michael, and Ranzato,
  Marc'Aurelio.
\newblock Learning longer memory in recurrent neural networks.
\newblock \emph{arXiv preprint arXiv:1412.7753}, 2014.

\bibitem[Morin \& Bengio(2005)Morin and Bengio]{morin2005hierarchical}
Morin, Frederic and Bengio, Yoshua.
\newblock Hierarchical probabilistic neural network language model.
\newblock In \emph{Aistats}, volume~5, pp.\  246--252. Citeseer, 2005.

\bibitem[Mussmann \& Ermon(2016)Mussmann and Ermon]{mussmann2016learning}
Mussmann, Stephen and Ermon, Stefano.
\newblock Learning and inference via maximum inner product search.
\newblock In \emph{Proceedings of The 33rd International Conference on Machine
  Learning}, pp.\  2587--2596, 2016.

\bibitem[Neal(2001)]{neal2001annealed}
Neal, Radford~M.
\newblock Annealed importance sampling.
\newblock \emph{Statistics and Computing}, 11\penalty0 (2):\penalty0 125--139,
  2001.

\bibitem[Shinde et~al.(2010)Shinde, Goel, Gupta, and
  Dutta]{shinde2010similarity}
Shinde, Rajendra, Goel, Ashish, Gupta, Pankaj, and Dutta, Debojyoti.
\newblock Similarity search and locality sensitive hashing using ternary
  content addressable memories.
\newblock In \emph{Proceedings of the 2010 ACM SIGMOD International Conference
  on Management of data}, pp.\  375--386. ACM, 2010.

\bibitem[Shrivastava \& Li(2014)Shrivastava and Li]{shrivastava2014asymmetric}
Shrivastava, Anshumali and Li, Ping.
\newblock Asymmetric lsh (alsh) for sublinear time maximum inner product search
  (mips).
\newblock In \emph{Advances in Neural Information Processing Systems}, pp.\
  2321--2329, 2014.

\bibitem[Shrivastava \& Li(2015{\natexlab{a}})Shrivastava and
  Li]{shrivastava2014improved}
Shrivastava, Anshumali and Li, Ping.
\newblock Improved asymmetric locality sensitive hashing (alsh) for maximum
  inner product search (mips).
\newblock In \emph{Conference on Uncertainty in Artificial Intelligence (UAI)},
  2015{\natexlab{a}}.

\bibitem[Shrivastava \& Li(2015{\natexlab{b}})Shrivastava and
  Li]{shrivastava2015asymmetric}
Shrivastava, Anshumali and Li, Ping.
\newblock Asymmetric minwise hashing for indexing binary inner products and set
  containment.
\newblock In \emph{Proceedings of the 24th International Conference on World
  Wide Web}, pp.\  981--991. International World Wide Web Conferences Steering
  Committee, 2015{\natexlab{b}}.

\end{thebibliography}
\end{small}
\bibliographystyle{icml2017}

\newpage
\appendix
\section{Appendix}
\begin{theorem}
Assume there is a set of states $Y$. Each state $y$ occurs with probability $[p_1 \dots p_N]$.
For some feature vector $x$, function $f(y_i) = e^{\theta_{y_i} \cdot x}$
Then, there is a random variable whose expected value is the partition function.
$$ Est = \sum_{i=1}^{N} 1_{[y_i \in S]} \cdot \frac{f(y_i)}{p_i} $$
$$ \mathbb{E}[Est] = \sum_{i=1}^{N} f(y_i) = Z_\theta $$
\end{theorem}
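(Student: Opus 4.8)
The plan is to prove unbiasedness by direct appeal to linearity of expectation, exploiting the fact that the only randomness in $Est$ lives in the indicator variables $\mathbf{1}_{[y_i \in S]}$. First I would observe that the weights $f(y_i)$ and the inclusion probabilities $p_i$ are deterministic quantities fixed before sampling, so in the expression
\begin{align}
Est = \sum_{i=1}^{N} \mathbf{1}_{[y_i \in S]} \cdot \frac{f(y_i)}{p_i},
\end{align}
the factor $\tfrac{f(y_i)}{p_i}$ can be pulled outside the expectation, leaving only $\mathbb{E}\big[\mathbf{1}_{[y_i \in S]}\big]$ to evaluate.

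Next I would identify $\mathbb{E}\big[\mathbf{1}_{[y_i \in S]}\big]$ with the marginal inclusion probability of state $y_i$. By the definition of the sampling process stated earlier, each state $y_i$ is selected into $S$ with probability $p_i$, so the indicator $\mathbf{1}_{[y_i \in S]}$ is a Bernoulli random variable with $\mathbb{E}\big[\mathbf{1}_{[y_i \in S]}\big] = p_i$. Applying linearity of expectation term by term then gives
\begin{align}
\mathbb{E}[Est] = \sum_{i=1}^{N} \frac{f(y_i)}{p_i}\,\mathbb{E}\big[\mathbf{1}_{[y_i \in S]}\big] = \sum_{i=1}^{N} \frac{f(y_i)}{p_i}\cdot p_i = \sum_{i=1}^{N} f(y_i) = Z_\theta,
\end{align}
which is exactly the claim. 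The division by $p_i$ is well defined precisely because every state with nonzero weight must have a strictly positive chance of being sampled; I would note this as the only regularity condition required.

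The point worth emphasizing — and the step a reader might mistakenly expect to be the obstacle — is that the correlations among the $\mathbf{1}_{[y_i \in S]}$ play \emph{no} role in the first-moment computation. Linearity of expectation holds for arbitrary (possibly dependent) random variables, so the correlated, unnormalized nature of the LSH-generated sample set is irrelevant to unbiasedness; it only enters the second moment, i.e.\ the variance, which is handled separately. Thus the argument is genuinely elementary, and the real technical content of the paper lies not here but in the variance analysis and in exhibiting a concrete, efficiently samplable form of $p_i$ via LSH. Since MIPS sampling is the special case $p_i = 1 - (1 - p^K)^L$, this theorem immediately certifies the LSH-based estimator as unbiased.
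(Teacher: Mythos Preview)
Your proposal is correct and is exactly the standard Horvitz--Thompson argument: linearity of expectation plus $\mathbb{E}[\mathbf{1}_{[y_i \in S]}]=p_i$. The paper in fact does not spell out a proof for this theorem at all (it treats the unbiasedness as immediate and reserves the written-out proof for the variance theorem), so your derivation is precisely the omitted verification and requires no changes.
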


\begin{theorem}
The variance of the partition function estimator is:
\begin{align}
Var[Est] &= \sum_{i=1}^{N} \frac{f(y_i)^2}{p_i} - \sum_{i=1}^{N} f(y_i)^2 \\
& + \sum_{i\ne j} \frac{f(y_i)f(y_j)}{p_ip_j} \mathrm{Cov}({\bf 1}_{[y_i \in S]} \cdot {\bf 1}_{[y_j \in S]})
\end{align}
If the states are selected independently, then we can write the variance as:
$$ Var[Est] = \sum_{i=1}^{N} \frac{f(y_i)^2}{p_i} - \sum_{i=1}^{N} f(y_i)^2 $$
\end{theorem}

\begin{proof}
The expression for the variance of the partition function estimator is: 
$$ Var[Est] = \mathbb{E}[Est^2] - \mathbb{E}[Est]^2 $$

\begin{align}
&Est^2 = \sum_{ij} 1_{[y_i \in S]} 1_{[y_j \in S]} \frac{f(y_i) f(y_j)}{p_i p_j} \\
&= \sum_{i} 1_{[y_i \in S]} \frac{f(y_i)^2}{p_i^2} + \sum_{i \neq j} 1_{[y_i \in S]} 1_{[y_j \in S]} \frac{f(y_i) f(y_j)}{p_i p_j}
\end{align}

{\bf Notice:}
$$\mathbb{E}[1_{[y_i \in S]}  1_{[y_j \in S]}] = \mathbb{E}[1_{[y_i \in S]}] \mathbb{E}[1_{[y_j \in S]}] + \mathrm{Cov}[1_{[y_i \in S]} 1_{[y_j \in S]}] $$
$$\mathbb{E}[1_{[y_i \in S]}] = p_i$$

\begin{align}
\mathbb{E}[Est^2] &= \sum_{i=1}^{N} \frac{f(y_i)^2}{p_i} + \sum_{i=1}^{N} f(y_i) [Z_\theta - f(y_i)] \\
&+ \sum_{i\ne j} \frac{f(y_i)f(y_j)}{p_ip_j} \mathrm{Cov}({\bf 1}_{[y_i \in S]} \cdot {\bf 1}_{[y_j \in S]}) \\
&= \sum_{i=1}^{N} \frac{f(y_i)^2}{p_i} +  Z_{\theta}^{2} - \sum_{i=1}^{N} f(y_i)^2 \\
&+ \sum_{i\ne j} \frac{f(y_i)f(y_j)}{p_ip_j} \mathrm{Cov}({\bf 1}_{[y_i \in S]} \cdot {\bf 1}_{[y_j \in S]})
\end{align}

{\bf Notice:}
$$\sum_{i=1}^{N} f(y_i)  = Z_{\theta}$$
$$\sum_{i \neq j}^{N} f(y_j) = \sum_{j=1}^{N} f(y_j) - f(y_i) = Z_{\theta} - f(y_i)$$

{\bf Notice:} $\mathbb{E}[Est]^2 = Z_{\theta}^{2}$.

Therefore,
\begin{align}
Var[Est] &= \sum_{i=1}^{N} \frac{f(y_i)^2}{p_i} - \sum_{i=1}^{N} f(y_i)^2 \\
& + \sum_{i\ne j} \frac{f(y_i)f(y_j)}{p_ip_j} \mathrm{Cov}({\bf 1}_{[y_i \in S]} \cdot {\bf 1}_{[y_j \in S]})
\end{align}

\end{proof} 

\begin{theorem}
  For any two states $y_1$ and $y_2$:
  $$ P(y_1|x;\theta) \ge P(y_2|x;\theta) \iff p_1 \ge p_2$$
  where 
  $$p_i = 1 - (1 - \mathcal{M}(\theta_{y_i} \cdot x)^K)^L$$
  $$P(y|x,\theta) \propto e^{\theta_y \cdot x}$$
\end{theorem}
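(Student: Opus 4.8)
The plan is to prove the equivalence by threading $P(y_i\mid x,\theta)$ and $p_i$ through a chain of monotone maps, each of which depends on the state $y_i$ only through the scalar inner product $s_i := \theta_{y_i}\cdot x$. Once everything is expressed as a function of $s_i$, the two-sided implication reduces to the monotonicity of a single composition.

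First I would reduce the probability side. Since $P(y\mid x,\theta)\propto e^{\theta_y\cdot x}$ and $t\mapsto e^{t}$ is strictly increasing, the unnormalized target preserves the ordering of the inner products, giving $P(y_1\mid x;\theta)\ge P(y_2\mid x;\theta)\iff s_1\ge s_2$. Next I would use the standing assumption (Section~\ref{sec:LSH}) that the collision probability $\mathcal{M}$ is a (strictly) monotonically increasing function of the inner product, so that $s_1\ge s_2\iff \mathcal{M}(s_1)\ge\mathcal{M}(s_2)$. It remains only to connect $\mathcal{M}(s_i)$ to $p_i$.

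The one genuinely computational step is to show that the $(K,L)$ amplification map preserves order. Writing $m:=\mathcal{M}(s_i)\in[0,1]$ (valid, since $\mathcal{M}$ returns a collision probability) and $g(m):=1-(1-m^{K})^{L}$, I would differentiate to obtain
$$ g'(m)=KL\,m^{K-1}(1-m^{K})^{L-1}\ge 0, $$
which is strictly positive on $(0,1)$ for $K,L\ge 1$. Hence $g$ is increasing, and combined with the previous links this yields the full chain
$$ P(y_1\mid x;\theta)\ge P(y_2\mid x;\theta)\iff s_1\ge s_2\iff \mathcal{M}(s_1)\ge\mathcal{M}(s_2)\iff g(\mathcal{M}(s_1))\ge g(\mathcal{M}(s_2)), $$
and the right-hand side is exactly $p_1\ge p_2$, which is what the theorem claims.

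The main obstacle, such as it is, lies in justifying the \emph{iff} rather than a one-directional implication: each link in the chain must be strictly monotone for both directions to hold, and the derivative $g'$ vanishes at the endpoints $m\in\{0,1\}$. I would handle this by observing that $g$ is the composition $u\mapsto 1-(1-u)^{L}$ with $m\mapsto m^{K}$, each of which is strictly increasing on all of $[0,1]$ (including the endpoints) directly from its closed form, so $g$ itself is strictly increasing on $[0,1]$ and the equivalence is exact. The Corollary that the two distributions share the same modes then follows immediately, since the argmax of $P(\cdot\mid x;\theta)$ and of $p_{(\cdot)}$ coincide under an order-preserving bijection.
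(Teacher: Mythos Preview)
Your proposal is correct and follows essentially the same approach as the paper, which dispatches the theorem in one line by invoking the monotonicity of $e^{x}$ and of $1-(1-\mathcal{M}(x)^K)^L$ in the inner product. Your chain-of-monotone-maps argument is exactly this, and your extra attention to \emph{strict} monotonicity (to secure the iff rather than a one-way implication) is more care than the paper itself takes.
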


\begin{proof}
Follows immediately from monotonicity of $e^{x}$ and $1 - (1 - \mathcal{M}(x)^K)^L$ with respect to the feature vector $x$. Thus, the target and the sample distributions have the same ranking for all the states under the probability. 
\end{proof}

\end{document}